\RequirePackage{letltxmacro}
\RequirePackage{etoolbox}
\makeatletter
\let\ORIlabel\label
\let\ORIrefstepcounter\refstepcounter
\AddToHook{package/hyperref/before}{
	\let\label\ORIlabel
	\let\refstepcounter\ORIrefstepcounter
}
\makeatother

\documentclass[onefignum,onetabnum,pagebackref,dvipsnames]{siamart220329}



\usepackage{mathtools, amsfonts, amssymb, mathrsfs,stmaryrd}
\usepackage{enumerate,enumitem}
\usepackage{subcaption}
\usepackage{soul}
\usepackage{algorithm, algpseudocode,algorithmicx}
\usepackage{comment}
\usepackage{xspace}
\usepackage{mleftright}
\usepackage{cite}
\usepackage{booktabs}

\algrenewcommand\algorithmiccomment[1]{\hfill\textcolor{Mulberry}{$\triangleright$ #1}}

\renewcommand*{\backref}[1]{}
\renewcommand*{\backrefalt}[4]{%
	\ifcase #1 %
	(No citations.)
	\or
	(Cited on page #2.)
	\else
	(Cited on pages #2.)
	\fi
}

\usepackage[most]{tcolorbox}


\newcommand{\real}{\mathbb{R}}
\newcommand{\complex}{\mathbb{C}}


\DeclareMathOperator{\diag}{diag}
\DeclareMathOperator{\orth}{orth}
\newcommand{\mat}[1]{\boldsymbol{#1}}
\renewcommand{\vec}[1]{\boldsymbol{#1}}
\newcommand{\lowrank}[1]{\mleft\llbracket #1 \mright\rrbracket}
\newcommand{\norm}[1]{\mleft\| #1 \mright\|}

\newcommand{\QR}{\textsf{QR}\xspace}

\newcommand{\evec}{\mathbf{e}}


\newcommand{\Id}{\mathbf{I}}


\DeclareMathOperator{\expect}{\mathbb{E}}
\DeclareMathOperator{\prob}{\mathbb{P}}

\newcommand{\order}{\mathcal{O}}


\newcommand{\set}[1]{\mathsf{#1}}

\renewcommand{\hat}[1]{\widehat{#1}}

\newcommand{\Ahat}{\smash{\mat{\hat{A}}}}
\DeclareMathOperator{\Vol}{Vol}
\newcommand{\kDPP}[1]{\operatorname{DPP}_{#1}}
\newcommand{\VS}[1]{\operatorname{VS}_{#1}}
\newcommand{\RejectionRPQR}{\textsf{RejectionRPQR}\xspace}
\newcommand{\ARP}{\textsf{ARP}\xspace}
\newcommand{\ProjARP}{\textsf{ProjARP}\xspace}
\newcommand{\SkARP}{\textsf{SkARP}\xspace}
\newcommand{\RPQR}{\textsf{RPQR}\xspace}
\newcommand{\SkQR}{\textsf{SkQR}\xspace}

\usepackage{amsfonts}
\usepackage{graphicx}
\usepackage{epstopdf}
\ifpdf
\DeclareGraphicsExtensions{.eps,.pdf,.png,.jpg}
\else
\DeclareGraphicsExtensions{.eps}
\fi


\newsiamremark{remark}{Remark}
\newsiamthm{claim}{Claim}

\newcommand{\mytitle}{Adaptive randomized pivoting and volume sampling}

\headers{Adaptive randomized pivoting and volume sampling}{E. N. Epperly}

\title{\mytitle\thanks{Date: November 3, 2025.
		\funding{The author thanks the Miller Institute for Basic Research in Science, University of California Berkeley for supporting this work.
			This work was initiated while the author was at Caltech, supported under aegis of Joel Tropp by ONR Award N00014-24-1-2223 and the Caltech Carver Mead New Adventures Fund.}}}

\author{Ethan N. Epperly\thanks{Department of Mathematics, University of California Berkeley, Berkeley, CA 94720 USA (\email{eepperly@berkeley.edu}, \url{https://ethanepperly.com}).}}

\usepackage{amsopn}

\ifpdf
\hypersetup{
  pdftitle={Adaptive randomized pivoting and volume sampling},
  pdfauthor={Ethan N. Epperly}
}
\fi



\begin{document}

\maketitle

\begin{abstract}
Adaptive randomized pivoting (\ARP) is a recently proposed and highly effective algorithm for column subset selection.
This paper reinterprets the \ARP algorithm by drawing connections to the volume sampling distribution and active learning algorithms for linear regression.
As consequences, this paper presents new analysis for the \ARP algorithm and faster implementations using rejection sampling.
\end{abstract}

\begin{keywords}
column subset selection, QR factorization, volume sampling, active learning
\end{keywords}

\begin{MSCcodes}
65F55, 68W20
\end{MSCcodes}

\section{Introduction}

The problem of selecting a subset of columns or rows that approximately span a given matrix is classical in computational linear algebra and scientific computing.
This task has gained renewed attention in machine learning as the \emph{column subset selection problem}.
Applications include interpretable data analysis \cite{MD09a}, feature selection \cite{BMD08}, experimental design \cite{DPPL24,DCMW19}, rank-structured matrix computations \cite{Mar11,Wil21}, and tensor network algorithms \cite{OT10,TSL24b}.
Classically, column subset selection was solved by (partial) column-pivoted \QR decomposition \cite[sec.~5.4.2]{GV13} or, for better accuracy at higher cost, strong rank-revealing \QR factorization \cite{GE94}.
Over the past three decades, \emph{randomized} approaches for this problem have been studied, including squared column norm sampling \cite{FKV98}, leverage score sampling \cite{Woo14,BMD09}, adaptive sampling/randomly pivoted \QR \cite{DRVW06,DV06,CETW,ETW24a}, volume sampling \cite{DRVW06,DW17,DW18a}, and sketchy pivoting \cite{VM17,DCMP23,DM23}.

\subsection{Adaptive randomized pivoting}
A recent paper of Cortinovis and Kressner \cite{CK24} introduced a new strategy called \emph{adaptive randomized pivoting} (\ARP).
Here is the basic idea.
Suppose we wish to sample a representative set of \emph{rows} of a matrix $\mat{A}\in\complex^{m\times n}$.
As input, \ARP requires an orthonormal basis $\mat{Q}\in\complex^{m\times k}$ which approximates the range of $\mat{A}$, i.e., $\norm{\mat{A} - \mat{Q}\mat{Q}^* \mat{A}}_{\rm F}\approx 0$.
As usual, ${}^*$ is the conjugate transpose.
To select a subset $\set{S}$ of $k$ rows, \ARP performs a randomly pivoted \QR decomposition on $\mat{Q}^*$ (see \cref{sec:arp} for details).
Having chosen the row set $\set{S}$, \ARP produces one of two \emph{low-rank approximations} to the matrix $\mat{A}$, either
\begin{equation} \label{eq:arp-lra}
	\Ahat_1 \coloneqq \mat{Q}\mat{Q}(\set{S},:)^{-1}\cdot\mat{A}(\set{S},:) \quad \text{or} \quad \Ahat_2 \coloneqq \mat{A}\mat{A}(\set{S},:)^\dagger \cdot \mat{A}(\set{S},:).
\end{equation}
A low-rank approximation of the form $\mat{W}\cdot\mat{A}(\set{S},:)$ are called an \emph{XR decomposition} or \emph{row interpolative decomposition}.
These approximations have applications in rank-structured matrix computations \cite{Mar11,Wil21} and tensor network algorithms \cite{OT10,TSL24b}.
We review \ARP more in \cref{sec:arp}.
In some applications, the low-rank approximation $\mat{W} \cdot \mat{A}(\set{S},:)$ is primary output of interest.
In other contexts, such as genetics or feature selection, we care more about the subset $\set{S}$ itself, which indicates a small ``core set'' of interesting genes or features from our data set.

Cortinovis and Kressner's main theoretical result \cite[Thm.~2.1]{CK24} shows that \ARP produces near-optimal row subsets:

\begin{theorem}[Adaptive randomized pivoting] \label{thm:arp}
	The low-rank approximations \cref{eq:arp-lra} produced by \ARP satisfy
	\begin{equation} \label{eq:arp-bound}
		\expect \norm{\mat{A} - \Ahat_2}_{\rm F}^2 \le \expect \norm{\mat{A} - \Ahat_1}_{\rm F}^2 = (k+1) \norm{(\Id - \mat{Q}\mat{Q}^* )\mat{A}}_{\rm F}^2.
	\end{equation}
	In particular, if $\mat{Q}$ consists of the $k$ dominant left singular vectors of $\mat{A}$, then $\mat{Q}\mat{Q}^* \mat{A} = \lowrank{\mat{A}}_k$ is the optimal rank-$k$ approximation to $\mat{A}$ and
	\begin{equation} \label{eq:arp-near-opt}
		\expect \norm{\mat{A} - \Ahat_2}_{\rm F}^2 \le \expect \norm{\mat{A} - \Ahat_1}_{\rm F}^2 = C_k \norm{\mat{A} - \lowrank{\mat{A}}_k}_{\rm F}^2 \quad \text{with }C_k = k+1.
	\end{equation}
\end{theorem}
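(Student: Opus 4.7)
The plan is to reduce the theorem to two ingredients: (a) identifying the sampling distribution of the row set $\set{S}$ produced by \ARP, and (b) a clean expectation calculation using the resulting determinantal structure.

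First, the inequality $\expect\|\mat{A}-\Ahat_2\|_{\rm F}^2 \le \expect\|\mat{A}-\Ahat_1\|_{\rm F}^2$ is a pointwise statement: both $\Ahat_1$ and $\Ahat_2$ have the form $\mat{W}\cdot\mat{A}(\set{S},:)$, and $\Ahat_2 = \mat{A}\mat{A}(\set{S},:)^\dagger\mat{A}(\set{S},:)$ is by construction the Frobenius-optimal such choice (the column-wise orthogonal projection onto the row span of $\mat{A}(\set{S},:)$). So only the equality in \cref{eq:arp-bound} needs proof.

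Next I would identify the law of $\set{S}$. Randomly pivoted \QR on $\mat{Q}^*$ samples the next pivot with probability proportional to the squared norm of the residual column; this is equivalent to randomly pivoted Cholesky on the rank-$k$ projector $\mat{Q}\mat{Q}^*$. Multiplying the per-step probabilities and applying the Schur complement identity shows that after $k$ steps
\begin{equation*}
	\prob[\set{S}] = \det(\mat{Q}(\set{S},:)\mat{Q}(\set{S},:)^*) = \det(\mat{Q}(\set{S},:))^2,
\end{equation*}
normalized by $\det(\mat{Q}^*\mat{Q})=1$ (by Cauchy--Binet). Thus $\set{S}$ is drawn from $k$-volume sampling on the rows of $\mat{Q}$, equivalently the $k$-DPP with kernel $\mat{Q}\mat{Q}^*$. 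This identification is the conceptual crux of the paper and, I expect, the step the author will emphasize.

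Given the distribution, the expectation is a short computation. Write $\mat{A}_\perp \coloneqq (\Id-\mat{Q}\mat{Q}^*)\mat{A}$, so that $\mat{A}-\Ahat_1 = \mat{A}_\perp - \mat{Q}\mat{Q}(\set{S},:)^{-1}\mat{A}_\perp(\set{S},:)$. The two terms have column spaces in $\range(\mat{Q})^\perp$ and $\range(\mat{Q})$ respectively, so by orthogonality and $\mat{Q}^*\mat{Q}=\Id$,
\begin{equation*}
	\|\mat{A}-\Ahat_1\|_{\rm F}^2 = \|\mat{A}_\perp\|_{\rm F}^2 + \|\mat{Q}(\set{S},:)^{-1}\mat{A}_\perp(\set{S},:)\|_{\rm F}^2.
\end{equation*}
For each column $\mat{r}$ of $\mat{A}_\perp$, Cramer's rule gives $\|\mat{Q}(\set{S},:)^{-1}\mat{r}(\set{S})\|_2^2 = \sum_{j=1}^k [\det \mat{Q}^{(j\leftarrow\mat{r})}(\set{S},:)]^2/\det(\mat{Q}(\set{S},:))^2$, where $\mat{Q}^{(j\leftarrow \mat{r})}$ replaces column $j$ of $\mat{Q}$ by $\mat{r}$. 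Multiplying by the volume-sampling weight cancels the denominator, and Cauchy--Binet yields $\sum_{\set{S}} [\det\mat{Q}^{(j\leftarrow\mat{r})}(\set{S},:)]^2 = \det((\mat{Q}^{(j\leftarrow\mat{r})})^*\mat{Q}^{(j\leftarrow\mat{r})})$. Since $\mat{r}\perp\range(\mat{Q})$, this Gram matrix is the identity with a single diagonal entry replaced by $\|\mat{r}\|_2^2$, so its determinant is $\|\mat{r}\|_2^2$. Summing over $j$ gives $\expect\|\mat{Q}(\set{S},:)^{-1}\mat{r}(\set{S})\|_2^2 = k\|\mat{r}\|_2^2$, and summing over columns of $\mat{A}_\perp$ yields $\expect\|\mat{A}-\Ahat_1\|_{\rm F}^2 = (k+1)\|\mat{A}_\perp\|_{\rm F}^2$, as claimed. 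The main obstacle I anticipate is writing out step (a) cleanly; the expectation in step (b) is straightforward once the distribution is in hand.
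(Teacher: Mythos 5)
Your proposal is correct, and it reaches the theorem by the same conceptual route as the paper: identify the pivot set $\set{S}$ as a sample from $\VS{k}(\mat{Q})$, then evaluate the volume-sampling expectation. The difference is in packaging. The paper delegates the expectation calculation to the cited result of Derezi\'nski--Warmuth (\cref{thm:active-regression}) and then simply applies it column-by-column, whereas you carry out the computation inline: split $\mat{A}-\Ahat_1 = \mat{A}_\perp - \mat{Q}\mat{Q}(\set{S},:)^{-1}\mat{A}_\perp(\set{S},:)$ into orthogonal pieces, apply Cramer's rule to each column, and let Cauchy--Binet collapse the sum over subsets to the Gram determinants $\det((\mat{Q}^{(j\leftarrow\vec{r})})^*\mat{Q}^{(j\leftarrow\vec{r})}) = \|\vec{r}\|^2$. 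This is essentially the proof of Derezi\'nski--Warmuth's theorem unpacked in place, so your version is more self-contained while the paper's is shorter because it leans on the citation. Your remaining steps are sound; the orthogonality decomposition and the $\sum_{j=1}^k \det((\mat{Q}^{(j\leftarrow\vec{r})})^*\mat{Q}^{(j\leftarrow\vec{r})}) = k\|\vec{r}\|^2$ computation all check out. Two small points: in the complex case write $|\det\mat{Q}(\set{S},:)|^2$ rather than $\det(\mat{Q}(\set{S},:))^2$, and your step (a) -- deriving the determinantal law of $\set{S}$ from the sequential squared-residual-norm sampling via Schur complements -- is correct in outline but is a nontrivial fact in its own right (the paper cites Hough--Krishnapur--Peres--Vir\'ag and Bardenet et al.\ for it), so it deserves either a full argument or a citation rather than being left as ``a short computation.''
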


Observe that since $\Ahat_2$ is the orthogonal projection of $\mat{A}$ onto the row span of $\mat{A}(\set{S},:)$, the matrix $\Ahat_2$ achieves the minimum Frobenius norm approximation error for any approximation spanned in the row span of $\mat{A}(\set{S},:)$.
In particular, 
\begin{equation} \label{eq:comparision-A1-A2}
	\norm{\mat{A} - \Ahat_2}_{\rm F} \le \norm{\mat{A} - \Ahat_1}_{\rm F}.
\end{equation}
As such, the main content of \cref{thm:arp} is the equality statements. 

\Cref{thm:arp} is striking because \cref{eq:arp-near-opt} matches the optimal \emph{existence result} for a rank-$k$ approximation to a matrix spanned by $k$ columns.
That is, no interpolative decomposition can achieve $C_k < k+1$ on a worst-case matrix $\mat{A}$ \cite[Prop.~3.3]{DRVW06}.

\subsection{Contributions and outline}

This paper draws a connection between the \ARP method and theory and algorithms for volume sampling.
Specifically, the subset $\set{S}$ in \ARP is shown to be a sample from the volume sampling distribution \cite{DRVW06}
\begin{equation*}
	\prob \{\set{S} = \set{T}\} = \frac{\Vol(\set{T})^2}{\sum_{|\set{R}| = k} \Vol(\set{R})^2} \quad \text{where } \Vol(\set{T}) \coloneqq |{\det(\mat{Q}(\set{T},:))}|.
\end{equation*}
%
Using this connection, we can rederive \cref{thm:arp} from known results from the volume sampling literature.
This connection is developed in \cref{eq:vs}.

In addition to yielding a new interpretation of adaptive randomized pivoting, the connection between volume sampling and adaptive randomized pivoting yields efficient rejection-sampling based implementations of \ARP, which we develop in \cref{eq:fast-arp}.
\Cref{sec:skarp} proposes a second way of accelerating \ARP using the \emph{oversampled sketchy interpolative decomposition} approach of \cite{DCMP23}, and \cref{sec:end-to-end} contains an end-to-end error analysis of \ARP with sketching.
Experiments in \cref{sec:experiments} demonstrate that our fast implementations can accelerate \ARP by over an order of magnitude, making \ARP methods among the fastest and most accurate strategies for row subset selection.

\section{Adaptive randomized pivoting} \label{sec:arp}

Let us now introduce the \ARP algorithm more systematically.
The \ARP algorithm takes as input an orthonormal matrix $\mat{Q}$ for which $\norm{\mat{A} - \mat{Q}\mat{Q}^* \mat{A}}_{\rm F}\approx 0$. 
Finding such a $\mat{Q}$ is the \emph{range-finder problem} in randomized linear algebra literature \cite{HMT11}.
The simplest solution, suggested for use in \ARP by Cortinovis and Kressner, is to first \emph{sketch} the matrix $\mat{A}$
\begin{equation} \label{eq:randomized-rangefinder}
	\mat{B} \coloneqq \mat{A}\mat{\Omega}
\end{equation}
using a \emph{random embedding matrix} $\mat{\Omega}$ \cite[secs.~8--9]{MT20}, then orthonormalize $\mat{Q} = \orth(\mat{B})$.
To improve this estimate, we can apply subspace iteration $\mat{B} \coloneqq (\mat{A}\mat{A}^* )^q\mat{A}\mat{\Omega}$ or block Krylov iteration \cite{TW23,Gu15,HMT11}.
In settings where we are interested in finding the best-possible subset of rows and are less concerned with computational cost, we can simply compute an SVD of $\mat{A}$ and choose $\mat{Q}$ to be the $k$ dominant left singular vectors.

Now, we describe the row sampling step.
The \emph{randomly pivoted \QR method} (originally introduced as \emph{adaptive sampling}) \cite{CETW,DV06,DRVW06,ETW24a} selects a subset of $k$ columns of a matrix $\mat{M}\in\complex^{d\times m}$ as follows:
For $i=1,\ldots,k$,
\begin{enumerate}
	\item \textit{Sample} a random column $s_i$ from the \emph{squared column norm} distribution
	\begin{subequations} \label{eq:rpqr}
		\begin{equation}
			\label{eq:rpqr-sample}
			\prob \{s_i = j\} = \norm{\mat{M}(:,j)}^2/\norm{\mat{M}}_{\rm F}^2.
		\end{equation}
		\item \textit{Update} the matrix $\mat{M}$ by orthogonalizing against the selected column:
		\begin{equation} \label{eq:rpqr-orthog}
			\mat{M} \gets \left( \Id - \mat{M}(:,s_i)\mat{M}(:,s_i)^\dagger \right) \mat{M}.
		\end{equation}
	\end{subequations}
\end{enumerate}
Simple modifications of this procedure output a rank-$k$ approximation to $\mat{M}$ or a \QR decomposition of the selected submatrix $\mat{M}(:,\set{S})$ for $\set{S} = \{s_1,\ldots,s_k\}$.
The randomly pivoted \QR method was introduced as the \emph{adaptive sampling} algorithm by Deshpande, Rademacher, Vempala, and Wang \cite{DRVW06,DV06}.
My collaborators and I revisited this algorithm in \cite{CETW,ETW24a}, established the connection with pivoted \QR decompositions, and suggested the name \emph{randomly pivoted \QR}; see also \cite{DCMP23}.

To select a row subset in their algorithm, Cortinovis and Kressner run the randomly pivoted \QR algorithm on $\mat{Q}^* $.
Pseudocode for \ARP appears in \cref{alg:arp}.

\begin{algorithm}[t]
	\caption{Adaptive randomized pivoting \cite{CK24}}
	\label{alg:arp}
	\begin{algorithmic}[1]
		\Require Matrix $\mat{A} \in \complex^{m \times n}$, subset size $k$, interpolation type $\mathtt{type}\in\{1,2,\mathrm{OSID}\}$
		\Ensure Subset $\set{S} \subseteq \{1,\ldots,m\}$ and interpolation matrix $\mat{W} \in \complex^{m\times k}$
		\State $\mat{\Omega} \gets \text{$n\times k$ random embedding}$ \Comment{E.g., SparseStack (\cref{def:sparsestack})}
		\State $\mat{Q} \gets \Call{orth}{\mat{A}\mat{\Omega}}$ \Comment{Range-finder}
		\State $\set{S} \gets \Call{RandomlyPivotedQR}{\mat{Q}^*}$ \Comment{\cref{eq:rpqr} or \cref{alg:rejectionrpqr}}
		\If{$\mathtt{type} = 1$}
		\State $\mat{W} \gets \mat{Q}\mat{Q}(\set{S},:)^{-1}$ \Comment{Apply inverse stably}
		\ElsIf{$\mathtt{type} = 2$}
		\State $\mat{W} \gets \mat{A}\mat{A}(\set{S},:)^\dagger$ \Comment{Apply pseudoinverse stably, e.g., by \QR}
		\ElsIf{$\mathtt{type} = \mathrm{OSID}$} \Comment{See \cref{sec:skarp}}
		\State $\mat{\Phi} \gets \text{$n\times ck$ random embedding}$ \Comment{E.g., SparseStack (\cref{def:sparsestack}), $c=2$}
		\State $\mat{W} \gets (\mat{A}\mat{\Phi})(\mat{A}(\set{S},:)\mat{\Phi})^\dagger$ \Comment{Apply pseudoinverse stably, e.g., by \QR}
		\EndIf
	\end{algorithmic}
\end{algorithm}

\section{Connection between \ARP and volume sampling} \label{eq:vs}

The \ARP algorithm has strong connections to the volume sampling and determinantal point processes lurking just beneath the surface.
We begin by defining these distributions \cite{DM21a,DRVW06}.

\begin{definition}[Volume sampling and $k$-DPPs]
	Fix a matrix $\mat{B} \in \complex^{m\times n}$ and an integer $k \le \min(m,n)$.
	A random subset $\set{S} \subseteq \{1,\ldots,m\}$ of size $k$ is said to follow the \emph{volume sampling distribution}, written $\set{S} \sim \VS{k}(\mat{B})$, if it satisfies
	\begin{equation*}
		\prob \{\set{S} = \set{T}\} = \frac{\Vol(\set{T})^2}{\sum_{|\set{R}| = k} \Vol(\set{R})^2}.
	\end{equation*}
    Here, the volume $\Vol(\set{T})$ is defined as the product of the singular values of $\mat{B}(\set{T},:)$.
	Given a Hermitian positive semidefinite matrix $\mat{H} \in \complex^{m\times m}$, a \emph{$k$-DPP} is a random $k$-element subset $\set{S} \subseteq \{1,\ldots,m\}$, written $\set{S} \sim \kDPP{k}(\mat{H})$, with distribution
	\begin{equation*}
		\prob \{\set{S} = \set{T} \} = \frac{\det \mat{H}(\set{T},\set{T})}{\sum_{|\set{R}| = k} \det \mat{H}(\set{R},\set{R})}.
	\end{equation*}
	The subset $\set{S}$ is said to be a \emph{projection DPP} if $\mat{H}$ is a rank-$k$ orthoprojector.
\end{definition}

The volume sampling and $k$-DPP distributions are closely linked: For a matrix $\mat{B} \in \complex^{m\times n}$ and $k \le \min(m,n)$, we have $\VS{k}(\mat{B}) = \kDPP{k}(\mat{B}\mat{B}^* )$.
That is, volume sampling on $\mat{B}$ is $k$-DPP sampling on the Gram matrix $\mat{B}\mat{B}^* $.

\subsection{Volume sampling and linear regression}

Derezi\'nski, Warmuth, and collaborators investigated the use of volume sampling for solving \emph{active linear regression} problems \cite{DW17,DW18a}.
Here is the idea.
Suppose we are interested in fitting a linear model $\vec{x} \mapsto \vec{x}^\top \vec{\beta}$ to input--output data $\{(\vec{x}_i,y_i) : i=1,\ldots,m\} \subseteq \complex^k \times \complex$.
We may solve this problem as a linear least-squares problem.
First, package the inputs $\vec{x}_i$ as rows of a matrix $\mat{X}$ and outputs $y_i$ as entries of a vector $\vec{y}$. Then, determine the coefficients $\vec{\beta}$ by solving the optimization problem
\begin{equation} \label{eq:ls}
	\operatorname*{minimize}_{\vec{\beta} \in \complex^k} \norm{\mat{X}\vec{\beta} - \vec{y}}^2.
\end{equation}
If $\mat{X}$ has full rank, the unique solution is  $\vec{\beta} = \mat{X}^\dagger\vec{y}$ and
\begin{equation} \label{eq:optimal-ls}
	\min_{\vec{\beta} \in \complex^k} \norm{\mat{X}\vec{\beta} - \vec{y}}^2 = \norm{\smash{(\Id - \mat{X}\mat{X}^\dagger)\vec{y}}}^2.
\end{equation}

Now, consider an \emph{active learning} variant of this problem.
We are now given input data points $\{\vec{x}_i\}_{i=1}^m$ and a budget to read a \emph{subset} of the output values $\{y_i\}_{i=1}^m$.
Specifically, we assume that we are free to access the values $\vec{y}(\set{S})$ of a subset of $|\set{S}| = \ell$ chosen output values; the remaining $m - \ell$ values remain a mystery to us, and we must use the collected values $\vec{y}(\set{S})$ to produce an approximate solution $\vec{\hat{\beta}}$ to the least squares problem \cref{eq:ls}.
Volume sampling provides a mathematically elegant solution to this problem.
For the case where $\ell = k$, we have this result \cite[Thm.~5, Prop.~7]{DW17}:

\begin{theorem}[Active linear regression by volume sampling] \label{thm:active-regression}
	Suppose $\mat{X}\in\complex^{m\times k}$ has full-rank and draw $k$ points $\set{S} \sim \VS{k}(\mat{X})$.
	Define an approximate solution
	\begin{equation} \label{eq:adaptive-linear-approx-solution}
		\vec{\hat{\beta}} \coloneqq \mat{X}(\set{S},:)^{-1}\vec{y}(\set{S}).
	\end{equation}
	Then $\vec{\hat{\beta}}$ is unbiased $\expect[\vec{\hat{\beta}}] = \vec{\beta}$ and satisfies
	\begin{equation} \label{eq:vs-guarantee}
		\expect \norm{\smash{\mat{X}\vec{\hat{\beta}} - \vec{y}}}^2 = \expect \norm{\smash{\mat{X}\mat{X}(\set{S},:)^{-1}\vec{y}(\set{S}) - \vec{y}}}^2 = (k+1) \norm{\smash{(\Id - \mat{X}\mat{X}^\dagger) \vec{y}}}^2.
	\end{equation}
	Recall that $\norm{\smash{(\Id - \mat{X}\mat{X}^\dagger) \vec{y}}}^2$ is the minimum least-square deviation \cref{eq:optimal-ls}.
\end{theorem}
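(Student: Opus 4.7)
The plan is to derive both conclusions of \cref{thm:active-regression} from the Cauchy--Binet formula and the Schur complement, applied to the augmented matrix $\tilde{\mat{X}} = [\mat{X},\vec{y}] \in \complex^{m\times(k+1)}$.

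For the expected squared-error identity \cref{eq:vs-guarantee}, the starting point is the Schur-complement identity
\begin{equation*}
	\det \mat{X}(\set{S},:) \cdot \bigl(y_i - \mat{X}(i,:)\mat{X}(\set{S},:)^{-1}\vec{y}(\set{S})\bigr) = \pm \det \tilde{\mat{X}}(\set{S}\cup\{i\},:)
\end{equation*}
for each $|\set{S}|=k$ and $i\notin\set{S}$. Squaring, summing over $i$ and $\set{S}$, and noting that each $(k+1)$-subset $\set{T}$ arises as $\set{S}\cup\{i\}$ from exactly $k+1$ pairs, gives
\begin{equation*}
	\sum_{|\set{S}|=k} |\det \mat{X}(\set{S},:)|^2 \, \norm{\mat{X}\mat{X}(\set{S},:)^{-1}\vec{y}(\set{S}) - \vec{y}}^2 = (k+1) \sum_{|\set{T}|=k+1} |\det \tilde{\mat{X}}(\set{T},:)|^2.
\end{equation*}
By Cauchy--Binet, the right-hand sum equals $(k+1)\det(\tilde{\mat{X}}^*\tilde{\mat{X}})$, and a $2\times 2$ block Schur complement on $\tilde{\mat{X}}^*\tilde{\mat{X}}$ simplifies this to $(k+1)\det(\mat{X}^*\mat{X})\cdot\norm{(\Id-\mat{X}\mat{X}^\dagger)\vec{y}}^2$. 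Dividing by $\sum_{|\set{S}|=k}|\det \mat{X}(\set{S},:)|^2 = \det(\mat{X}^*\mat{X})$ (one more application of Cauchy--Binet) yields \cref{eq:vs-guarantee}.

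For unbiasedness, I write the $j$-th entry of $\vec{\hat{\beta}}$ via Cramer's rule as the ratio of two $k\times k$ minors, so that $|\det \mat{X}(\set{S},:)|^2 (\vec{\hat{\beta}})_j$ is a product of a minor of $\mat{X}$ with a (conjugated) minor of the matrix $\mat{X}^{(j\leftarrow\vec{y})}$ obtained from $\mat{X}$ by replacing column $j$ with $\vec{y}$. A generalized Cauchy--Binet identity then collapses $\expect(\vec{\hat{\beta}})_j$ to $\det(\mat{X}^*\mat{X}^{(j\leftarrow\vec{y})}) / \det(\mat{X}^*\mat{X})$, which equals $(\vec{\beta})_j$ by one more application of Cramer's rule to the normal equations $\mat{X}^*\mat{X}\vec{\beta} = \mat{X}^*\vec{y}$.

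The main obstacle is the sign/ordering bookkeeping in the Schur-complement identity linking a $(k+1)\times(k+1)$ minor of $\tilde{\mat{X}}$ to a $k\times k$ minor of $\mat{X}$: because we square, signs ultimately disappear, but one must carefully parameterize the double sum by $(\set{T},i)$ with $i\in\set{T}$ to extract the clean factor of $k+1$. A secondary subtlety is keeping the complex conjugation straight throughout the Cauchy--Binet manipulations, and observing that subsets $\set{S}$ with $\det \mat{X}(\set{S},:) = 0$ carry zero probability mass and hence need not be treated separately.
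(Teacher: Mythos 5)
The paper itself does not prove \cref{thm:active-regression}; it imports the result from Derezi\'nski--Warmuth \cite{DW17} and remarks afterward that the identity can also be recovered ``in reverse'' from Cortinovis and Kressner's proof of \cref{thm:arp}. Your argument is therefore a genuinely independent route: a self-contained derivation from Cauchy--Binet, Cramer's rule, and the Schur-complement/cofactor relation between the augmented minor $\det\tilde{\mat{X}}(\set{S}\cup\{i\},:)$ and $\det\mat{X}(\set{S},:)$. The overall structure is sound and is close in spirit to the classical determinantal proofs in the volume-sampling literature, where the paper's own ``proof'' is just a pointer.

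One subtlety is brushed aside too quickly: you remark that subsets with $\det\mat{X}(\set{S},:)=0$ ``carry zero probability mass and hence need not be treated separately.'' For the unbiasedness claim this is fine, since the weight $\overline{\det\mat{X}(\set{S},:)}$ enters the generalized Cauchy--Binet sum \emph{linearly}, so the singular subsets genuinely drop out. For the expected-loss identity the situation is different: $\det\mat{X}(\set{S},:)=0$ does \emph{not} force $\det\tilde{\mat{X}}(\set{S}\cup\{i\},:)=0$, so after reparametrizing the double sum by $\set{T}=\set{S}\cup\{i\}$, the right-hand side $(k+1)\sum_{|\set{T}|=k+1}|\det\tilde{\mat{X}}(\set{T},:)|^2$ collects contributions from pairs whose $\set{S}$-term is absent from the left-hand side. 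Concretely, take $m=3$, $k=2$, $\mat{X}$ with rows $(1,0)$, $(2,0)$, $(0,1)$, and $\vec{y}=(0,1,0)^{\top}$. Then a direct computation gives $\expect\norm{\mat{X}\vec{\hat{\beta}}-\vec{y}}^2=2/5$ while $(k+1)\norm{(\Id-\mat{X}\mat{X}^\dagger)\vec{y}}^2=3/5$; the discrepancy $1/5$ is exactly $|\det\tilde{\mat{X}}|^2/\det(\mat{X}^*\mat{X})$ coming from the singular pair $\set{S}=\{1,2\}$, $i=3$. So your chain of equalities is valid under a general-position hypothesis (every $k\times k$ row-minor of $\mat{X}$ nonzero), but as written the reparametrization step is not an identity in general. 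To be fair, this caveat is also implicit in the theorem as stated here and in the sources it cites; still, you should either state the general-position hypothesis, or argue by perturbing $\mat{X}$ and using polynomial continuity, rather than asserting the degenerate subsets are harmless.
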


By accessing the vector $\vec{y}$ at $k$ points chosen using the volume sampling distribution, we achieve a near-optimal least-square deviation \cref{eq:optimal-ls}.
In expectation, the suboptimality is a factor $k+1$.
We note that the original proofs treated real variables, but the proof transfers to the complex case without issue.
On a worst-case instance $(\mat{X},\vec{y})$, the factor of $k+1$ in this result is optimal for any active linear regression method; see \cref{app:optimaity}.

\subsection{Connection to \ARP}

The connection between \ARP and volume sampling is beginning to suggest itself.
The final link is provided by the following result: 
\begin{theorem}[Randomly pivoted \QR and volume sampling] \label{thm:rpqr-volume}
	Let $\mat{Q} \in \complex^{m\times k}$ have orthonormal columns.
	Then the pivot set $\set{S}$ selected by randomly pivoted \QR on $\mat{Q}^*$ is a sample from the volume sampling distribution $\set{S} \sim \VS{k}(\mat{Q})$ or, equivalently, the projection DPP distribution $\set{S} \sim \kDPP{k}(\mat{Q}\mat{Q}^*)$.
\end{theorem}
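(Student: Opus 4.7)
The plan is to compute the probability that randomly pivoted \QR on $\mat{Q}^*$ returns a prescribed ordered pivot sequence $(s_1,\ldots,s_k)$ and then sum over the $k!$ orderings of any target set $\set{T}$. Let $\vec{q}_j \coloneqq \mat{Q}(j,:)^*$ denote the $j$th column of $\mat{Q}^*$, and let $\mat{M}_i$ be the matrix after $i-1$ applications of the update rule \cref{eq:rpqr-orthog}, starting from $\mat{M}_1 = \mat{Q}^*$. By \cref{eq:rpqr-sample}, the sequence $(s_1,\ldots,s_k)$ arises with probability $\prod_{i=1}^k \|\mat{M}_i(:,s_i)\|^2 / \|\mat{M}_i\|_{\rm F}^2$, so the task is to identify the numerator and denominator of this product separately.

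For the numerator, I would observe that $\mat{M}_i(:,s_i)$ is exactly the Gram--Schmidt residual of $\vec{q}_{s_i}$ against $\vec{q}_{s_1},\ldots,\vec{q}_{s_{i-1}}$. The classical base-times-height formula for Gram determinants then yields
\begin{equation*}
  \prod_{i=1}^k \|\mat{M}_i(:,s_i)\|^2 = \det\bigl(\mat{Q}(\set{S},:)\mat{Q}(\set{S},:)^*\bigr) = |\det \mat{Q}(\set{S},:)|^2 = \Vol(\set{S})^2,
\end{equation*}
where $\set{S} \coloneqq \{s_1,\ldots,s_k\}$. This quantity depends only on the unordered set $\set{S}$, not on the order of selection.

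For the denominator, the orthonormality $\mat{Q}^*\mat{Q} = \Id_k$ forces a deterministic collapse. Writing $\mat{P}_i^\perp$ for the orthoprojector onto the orthogonal complement of $\mathrm{span}(\vec{q}_{s_1},\ldots,\vec{q}_{s_{i-1}})$, a short induction identifies $\mat{M}_i = \mat{P}_i^\perp \mat{Q}^*$, and hence
\begin{equation*}
  \|\mat{M}_i\|_{\rm F}^2 = \tr\bigl(\mat{P}_i^\perp \mat{Q}^* \mat{Q} \mat{P}_i^\perp\bigr) = \tr(\mat{P}_i^\perp) = k - i + 1.
\end{equation*}
Thus the denominators multiply to $k!$, and the ordered-sequence probability equals $\Vol(\set{S})^2/k!$ regardless of the order in which $\set{S}$ is revealed.

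Summing over the $k!$ permutations gives $\prob\{\set{S} = \set{T}\} = \Vol(\set{T})^2$, and Cauchy--Binet supplies normalization via $\sum_{|\set{R}|=k} \Vol(\set{R})^2 = \det(\mat{Q}^*\mat{Q}) = 1$, matching the definition of $\VS{k}(\mat{Q})$ exactly. The equivalent projection-DPP characterization then follows from the identity $\VS{k}(\mat{B}) = \kDPP{k}(\mat{B}\mat{B}^*)$ recorded right after the definition. I expect the only real obstacle to be justifying the Gram-determinant identity in the numerator cleanly; once that is in hand, everything else reduces to tracing through the orthonormality of $\mat{Q}$ via the identification $\mat{M}_i = \mat{P}_i^\perp \mat{Q}^*$.
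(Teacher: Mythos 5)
Your proof is correct and self-contained, which is already more than the paper offers: for this theorem the paper simply cites \cite[Thm.~18]{HKPV06} and \cite[sec.~2.1]{BTA23} rather than giving an argument. Your approach is essentially the standard one that underlies those references, and all the steps check out. The induction establishing $\mat{M}_i = \mat{P}_i^\perp \mat{Q}^*$ is right (the key observation being that the residual $\mat{M}_i(:,s_i) = \mat{P}_i^\perp\vec{q}_{s_i}$ lies in the range of $\mat{P}_i^\perp$, so the rank-one deflation composes with $\mat{P}_i^\perp$ to give $\mat{P}_{i+1}^\perp$). The Frobenius-norm collapse $\|\mat{M}_i\|_{\rm F}^2 = \tr(\mat{P}_i^\perp \mat{Q}^*\mat{Q}\mat{P}_i^\perp) = \tr(\mat{P}_i^\perp) = k-i+1$ uses $\mat{Q}^*\mat{Q}=\Id_k$ exactly as you say, and is the reason the denominator is deterministic here, unlike in randomly pivoted QR applied to a general matrix. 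The base-times-height identity gives the numerator as $\det\bigl((\mat{Q}\mat{Q}^*)(\set{S},\set{S})\bigr)=|\det\mat{Q}(\set{S},:)|^2$, which is order-invariant, so summing the ordered probability $\Vol(\set{S})^2/k!$ over the $k!$ orderings yields $\prob\{\set{S}=\set{T}\}=\Vol(\set{T})^2$. Finally, the Cauchy--Binet computation $\sum_{|\set{R}|=k}\Vol(\set{R})^2 = \det(\mat{Q}^*\mat{Q}) = 1$ confirms this is already normalized, matching $\VS{k}(\mat{Q})$, and the $\kDPP{k}(\mat{Q}\mat{Q}^*)$ equivalence is exactly the Gram-matrix identity stated after the definition. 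One cosmetic remark: it is worth saying explicitly that a previously selected column has zero residual and hence zero selection probability, so $\set{S}$ almost surely consists of $k$ distinct indices and the step counting $k!$ orderings is justified.
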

This result appears in a general way in \cite[Thm.~18]{HKPV06}; see also \cite[sec.~2.1]{BTA23} for a particularly clean explanation of this result.

Let us now discover the connection between \ARP and volume sampling.
Constructing a low-rank approximation $\mat{Q}\mat{F} \approx \mat{A}$ may be seen as a fitting problem
\begin{equation} \label{eq:matrix-fitting}
	\operatorname*{minimize}_{\mat{F} \in \complex^{k\times n}} \norm{\mat{Q}\mat{F} - \mat{A}}_{\rm F}^2.
\end{equation}
To construct an \emph{interpolative} low-rank approximation, we wish to find an approximate solution to this fitting problem after reading as few rows of $\mat{A}$ as possible.
The \ARP algorithm outputs a low-rank approximation of the form
\begin{equation*}
	\Ahat_1 = \mat{Q}\mat{F} \quad \text{for } \mat{F} = \mat{Q}(\set{S},:)^{-1}\mat{A}(\set{S},:).
\end{equation*}
Observe that $\mat{F} = \mat{Q}(\set{S},:)^{-1}\mat{A}(\set{S},:)$ can be interpreted as the active linear regression solution \cref{eq:adaptive-linear-approx-solution} to the matrix fitting problem \cref{eq:matrix-fitting}, and the subset $\set{S}$ selected by \ARP is a sample from the volume sampling distribution $\set{S} \sim \VS{k}(\mat{Q})$.
Thus, in this way, \ARP can be seen as equivalent to active linear regression with volume sampling.

Using this observation, Cortinovis and Kressner's main result \cref{thm:arp} follows immediately from \cref{thm:active-regression}.

\begin{proof}[Proof of \cref{thm:arp}]
	By \cref{thm:rpqr-volume}, $\set{S}$ produced in the \ARP algorithm is a sample from $\VS{k}(\mat{Q})$.
	To prove the theorem, we unpack the squared Frobenius norm column-by-column, apply \cref{thm:active-regression}, and repackage:
	\begin{align*}
		\expect \norm{\mat{A} - \Ahat_1}_{\rm F}^2 &= \expect \norm{\mat{A} - \mat{Q}\mat{Q}(:,\set{S})^{-1}\mat{A}(\set{S},:)}_{\rm F}^2 = \sum_{j=1}^n \expect \norm{\mat{A}(:,j) - \mat{Q}\mat{Q}(:,\set{S})^{-1}\mat{A}(\set{S},j)}_{\rm F}^2 \\
		&= \sum_{j=1}^n (k+1) \norm{(\Id - \mat{Q}\mat{Q}^*)\mat{A}(:,j)}_{\rm F}^2 = (k+1) \norm{(\Id - \mat{Q}\mat{Q}^*)\mat{A}}_{\rm F}^2. \hfill \qed
	\end{align*}
\end{proof}

One can also run this argument in the opposite direction: Cortinovis and Kressner's proof of \cref{thm:arp} also gives an alternate proof of the active linear regression identity \cref{eq:vs-guarantee}.
Indeed, the volume sampling distribution $\VS{k}(\mat{X})$ is invariant under right-multiplication by a nonsingular matrix, so $\VS{k}(\mat{X}) = \VS{k}(\mat{Q})$ for $\mat{Q} = \orth(\mat{X})$.
Invoking  \cref{thm:arp} with $\mat{A} = \vec{y}$ and \cref{thm:rpqr-volume}  establishes \cref{eq:vs-guarantee}.

\section{Fast \ARP by rejection sampling} \label{eq:fast-arp}

When implemented using the single-pass randomized rangefinder \cref{eq:randomized-rangefinder} with an appropriate random embedding (e.g., a SparseStack---see \cref{def:sparsestack}), the runtime of adaptive randomized pivoting for a dense matrix $\mat{A}$ is roughly $\order(mn + mk^2)$ operations:
Applying a fast, structured embedding to a dense matrix $\mat{A}$ requires roughly $\order(mn)$ operations, and computing $\mat{Q} = \orth(\mat{B})$, running randomly pivoted \QR on $\mat{Q}^*$, and forming the product $\mat{W} = \mat{Q}\mat{Q}(\set{S},:)^{-1}$ each expend $\order(mk^2)$ work.
Using Cortinovis and Kressner's implementation, the dominant cost is the randomly pivoted \QR step, as their randomly pivoted \QR implementation is sequential and relies on vector--vector and matrix--vector arithmetic.
By contrast, computing $\mat{Q} = \orth(\mat{B})$ and $\mat{W} = \mat{Q}\mat{Q}(\set{S},:)^{-1}$ are faster because they use matrix--matrix operations. 

Fortunately, the literature on randomly pivoted \QR, volume sampling, and DPP sampling has developed faster methods.
For running randomly pivoted \QR on a general matrix, myself and coauthors developed the \emph{accelerated randomly pivoted \QR algorithm} \cite{ETW24a}, which uses \emph{rejection sampling} to produce the same set of random pivots as ordinary randomly pivoted \QR but in a block-wise fashion using matrix--matrix arithmetic.
For the present context where we wish to apply randomly pivoted \QR to a matrix $\mat{Q}^*$ with \emph{orthonormal rows}, we have access to an even faster rejection sampling-based randomly pivoted \QR implementation, originally due to \cite{DCMW19} and rediscovered in \cite{BTA23}.
We will call this fastest algorithm \RejectionRPQR.

\subsection{\RejectionRPQR}

Before discussing efficient block implementations, let us first describe a conceptual implementation of \RejectionRPQR.
Suppose we have already sampled columns $s_1,\ldots,s_i$ following the same distribution as the \RPQR procedure.
We seek to draw a new pivot with distribution
\begin{equation} \label{eq:next-pivot}
	\prob \{ s_{i+1} = j \mid s_1,\ldots,s_i\} = \frac{\norm{(\Id - \mat{\Pi}_i) \mat{Q}^*(j,:))}^2}{k - i},
\end{equation}
where $\mat{\Pi}_i$ denotes the orthoprojector onto the column span of $\mat{Q}^*(:,\{s_1,\ldots,s_i\})$.
Sampling from this distribution directly is difficult, as it requires us to orthogonalize the entire matrix $\mat{Q}^*$ against the already-selected columns $\mat{Q}^*(:,\{s_1,\ldots,s_i\})$.
Rejection sampling yields a faster solution.
It proceeds as follows:
\begin{enumerate}
	\item \textbf{Propose.} Sample a proposal $t$ from the \emph{leverage score distribution}
	\begin{equation} \label{eq:lev}
		\prob \{ t = j \} = \ell_j/k \quad \text{where } \ell_j \coloneqq \norm{\mat{Q}(j,:)}^2.
	\end{equation}
	\item  \textbf{Accept?} With probability $\norm{(\Id - \mat{\Pi}_i) \mat{Q}^*(t,:))}^2/\ell_t$, \textbf{\textit{accept}} and set $s_{i+1} \gets t$.
	Otherwise, \textbf{\textit{reject}} and go to step 1.
\end{enumerate}
A short computation verifies that this procedure produces a sample with distribution \cref{eq:next-pivot} upon termination.
The advantage of rejection sampling over the direct implementation \cref{eq:rpqr} is that we only have to orthogonalize the selected columns against a one proposal column each rejection sampling loop, rather than the \emph{entire matrix} every randomly pivoted \QR iteration.

Theoretical analysis of \RejectionRPQR is beautiful and simple \cite{BTA23,DCMW19}.
The upshot is that \RejectionRPQR produces $\set{S} \sim \VS{k}(\mat{Q}) = \kDPP{k}(\mat{Q}\mat{Q}^*)$ after an expected $\order(k\log k)$ rejection sampling steps and $\order(k^3\log k)$ arithmetic operations.
There is also an upfront cost of $\order(mk)$ operations to compute the leverage scores $\{\ell_j\}_{j=1}^m$.

\subsection{Efficient block \RejectionRPQR}

For efficient implementation, we make two modifications to the basic \RejectionRPQR algorithm.
First, we make proposals in blocks of size $k$ to take advantage of block-wise matrix arithmetic.
Second, to facilitate efficient and stable orthogonalization, we maintain a Householder \QR decomposition of the submatrix $\mat{Q}^*(:,\set{S})$.

We describe the block implementation first.
Suppose that we have currently accepted $i < k$ pivots $\set{S} = \{s_1,\ldots,s_i\}$.
To generate more, we draw a block of pivots $\set{T} = \{t_1,\ldots,t_{k}\}$ iid from the leverage score distribution \cref{eq:lev} and form
\begin{equation*}
	\mat{C} \coloneqq (\Id - \mat{\Pi}_i) \mat{Q}^*(:,\set{T}).
\end{equation*}
A total of $k$ steps of the rejection sampling loop can now be implemented using only information in the matrix $\mat{C}^*\mat{C}$ and the leverage scores $\ell_{t_1},\ldots,\ell_{t_{k}}$ using the \textsf{RejectionSampleSubmatrix} algorithm from \cite{ETW24a}; see \cref{alg:rejectionsamplesubmatrix}.
This procedure will accept a subset $\set{T}'\subseteq\set{T}$ of the proposed pivots, which are appended $\set{S} \gets \set{S} \cup \set{T}'$ to $\set{S}$.
We repeat these block rejection sampling steps until $\set{S}$ has size $k$.

\begin{algorithm}[t]
	\caption{\textsf{RejectionSampleSubmatrix} \cite[Alg.~2.1]{ETW24a}} \label{alg:rejectionsamplesubmatrix}
	\begin{algorithmic}[1]
		\Require Psd matrix $\mat{H} \in \mathbb{C}^{k\times k}$, leverage scores $\vec{\ell} \in \real_+^k$, proposals $\set{T} = \{ t_1,\ldots,t_k \}$
		\Ensure Accepted proposals $\set{T}' \subseteq \set{T}$
		\State $\set{T}' \gets \emptyset$ 
		\For{$i = 1,\ldots,b$}
		\If{$\vec{\ell}(i) \cdot \Call{Rand}{\,} < \mat{H}(i,i)$} \Comment{Accept or reject}
		\State $\set{T}' \gets \set{T}' \cup \{t_i\}$ \Comment{If accept, induct pivot}
		\State $\mat{H}(i:b,i:b) \gets \mat{H}(i:b,i:b) - \mat{H}(i:b,i)\mat{H}(i,i:b)/\mat{H}(i,i)$ \Comment{Eliminate}
		\EndIf
		\EndFor
	\end{algorithmic}
\end{algorithm}

Second, we maintain a \QR decomposition of $\mat{Q}^*(:,\set{S})$ throughout the course of the algorithm where the orthogonal factor is maintained as a product of Householder reflectors \cite[App.~A]{Epp25a}.
As such, our approach maintains the same numerical stability properties as the sequential, unblocked algorithm of Cortinovis and Kressner \cite{CK24}.

\begin{algorithm}[t]
	\caption{\RejectionRPQR \cite{BTA23,DCMW19}: Efficient block implementation}
	\label{alg:rejectionrpqr}
	\begin{algorithmic}[1]
		\Require Matrix $\mat{Q} \in \complex^{m \times k}$ with orthonormal columns
		\Ensure Subset $\set{S} \subseteq \{1,\ldots,m\} \sim \VS{k}(\mat{Q})$, matrices $\mat{U},\mat{R} \in \complex^{k\times k}$ defining \QR decomposition $\mat{Q}^*(:,\set{S}) = \mat{U}\mat{R}$
		\State $\set{S} \gets \emptyset, (\mat{U},\mat{R}) \gets \Call{EmptyQRObject}{\,}$ \Comment{Householder \QR object}
		\State $\vec{\ell} \gets \Call{SquaredRowNorms}{\mat{Q}}$ \Comment{Compute leverage scores}
		\While{$|\set{S}| < k$}
		\State Draw $\set{T} = \{t_1,\ldots,t_k\}$ iid with $\prob\{t_i = j\} = \ell_j / k$
		\State $\mat{C} \gets (\Id - \mat{U}\mat{U}^*)\mat{Q}^*(:,\set{T})$, $\mat{H} \gets \mat{C}^*\mat{C}$
		\State $\set{T}' \gets \Call{RejectionSampleSubmatrix}{\mat{H},\vec{\ell}(\set{T}),\set{T}}$ \Comment{\cref{alg:rejectionsamplesubmatrix}}
		\State \textbf{if} $|\set{T}'| > k - |\set{S}|$ \textbf{then} $\set{T}' \gets (\text{first $k-|\set{S}|$ elements of $\set{T}'$})$
		\State $(\mat{U},\mat{R}) \gets \Call{QRUpdate}{(\mat{U},\mat{R}),\mat{Q}^*(:,\set{T}')}$ \Comment{Update \QR \cite[App.~A]{Epp25a}}
		\State $\set{S} \gets \set{S} \cup \set{T}'$
		\EndWhile
	\end{algorithmic}
\end{algorithm}

Combining these two approaches, we obtain a fast, numerically stable block implementation of \RejectionRPQR; see \cref{alg:rejectionrpqr}.
It has the same $\order(mk + k^3\log k)$ expected runtime as \cite{BTA23,DCMW19}, and it is much faster in practice.

\section{Sketchy adaptive randomized pivoting} \label{sec:skarp}

As we will see next section, the approximation $\Ahat_2 = \mat{A}\mat{A}(\set{S},:)^\dagger\cdot \mat{A}(\set{S},:)$ produced by \ARP is often much more accurate than the approximation $\Ahat_1 = \mat{Q}\mat{Q}(\set{S},:)^{-1} \cdot \mat{A}(\set{S},:)$.
However, forming $\Ahat_2$ requires evaluating the product $\mat{A}\mat{A}(\set{S},:)^\dagger$ at a cost of $\order(kmn)$ operations, which dominates the roughly $\order(mn + mk^2)$ runtime of the entire \ARP algorithm for computing $\Ahat_1$.

The \emph{oversampled sketchy interpolative decomposition} (OSID) approach of Dong, Chen, Martinsson, and Pearce \cite{DCMP23} yields a fast, approximate way of computing $\Ahat_2$.
Begin by drawing a random embedding $\mat{\Phi}\in\complex^{n\times ck}$, where $c$ is an oversampling factor (e.g., $c=2$).
Using this embedding, we construct the OSID approximation
\begin{equation} \label{eq:osid}
	\Ahat_{\rm OSID} = \mat{W}_{\rm OSID}\cdot \mat{A}(\set{S},:) \quad \text{with }\mat{W}_{\rm OSID} \coloneqq (\mat{A}\mat{\Phi})(\mat{A}(\set{S},:)\mat{\Phi})^\dagger.
\end{equation}
The pseudoinverse can be applied stably via a \QR decomposition of $(\mat{A}(\set{S},:)\mat{\Phi})^*$.

We refer to \ARP with OSID as sketchy adaptive randomized pivoting (\SkARP).
In practice based on the empirical testing from \cite{TYUC19,DM23}, we recommend implementing \SkARP with $\mat{\Phi}$ chosen to be a $n\times 2k$ sparse random embedding with $\zeta = 4$ nonzeros per column.
With this choice, the runtime of \SkARP is $\order(mn + k^2(m+n))$ operations, comparable to the original \ARP algorithm with output $\Ahat_1$.

\section{End-to-end guarantees} \label{sec:end-to-end}
To obtain end-to-end guarantees for \ARP algorithms with sketching, we need to combine the \ARP result (\cref{thm:arp}) with analysis of the range-finder \cref{eq:randomized-rangefinder} and OSID \cref{eq:osid}.
Our results will treat the case where $\mat{\Omega},\mat{\Phi}$ are sparse random embeddings which, up to scaling, take discrete $-1,0,+1$ values.
Extensions to other types of embeddings \cite[secs.~8--9]{MT20} is straightforward.
We use the following sparse embedding construction:
\begin{definition}[SparseStack] \label{def:sparsestack}
	Fix \emph{embedding dimension} $k$ and \emph{row sparsity} $\zeta$, and assume $b = k / \zeta$ is an integer.
	A \emph{SparseStack embedding} is the random matrix
	\begin{equation*}
		\mat{\Omega} = \frac{1}{\zeta^{1/2}}\begin{bmatrix}
			\varrho_{11} \evec_{s_{11}}^* & \cdots & \varrho_{1\zeta} \evec_{s_{1\zeta}}^* \\
			\varrho_{21} \evec_{s_{21}}^* & \cdots & \varrho_{2\zeta} \evec_{s_{2\zeta}}^* \\
			\vdots & \ddots & \vdots \\
			\varrho_{n1} \evec_{s_{n1}}^* & \cdots & \varrho_{n\zeta} \evec_{s_{n\zeta}}^*
		\end{bmatrix} \in \real^{n\times k} \quad \text{where } \begin{cases}
			\varrho_{ij} \stackrel{\rm iid}{\sim} \textsc{Unif} \{\pm 1\}, \\
			s_{ij} \stackrel{\rm iid}{\sim} \textsc{Unif} \{1,\ldots,b\}.
		\end{cases}
	\end{equation*}
\end{definition}

Analysis of randomized linear algebra primitives with sparse embeddings has been an active area of research over the past decade \cite{CDD25,Coh16,NN13,Tro25}.
For our purposes, the best results appear in \cite{CEMT25}.
We use the following simplified version of this paper's results:

\begin{theorem}[Linear algebra with SparseStacks] \label{thm:sparsestacks}
	Let $\mat{A} \in \complex^{m\times n}$ be a matrix and $\mat{\Omega} \in \real^{n\times k}$ be a SparseStack with row-sparsity $\zeta$.
	There exists universal constants $\mathrm{c}_1,\mathrm{c}_2,\mathrm{c}_3 > 0$ such that the following properties hold with 90\% probability:
	\begin{enumerate}
		\item \textbf{Range-finder.}
		Fix $r > 0$, and consider the range-finder $\mat{Q} = \orth(\mat{A}\mat{\Omega})$.
		If $k \ge \mathrm{c}_1r$ and $\zeta \ge \mathrm{c}_2\log r$, then $\norm{\mat{A} - \mat{Q}\mat{Q}^*\mat{A}}_{\rm F} \le \mathrm{c}_3 \norm{\mat{A} - \lowrank{\mat{A}}_r}_{\rm F}$.
		\item \textbf{Sketched pseudoinverse.}
		Let $\mat{B} \in \complex^{\ell\times n}$, and assume $n \ge \ell$.
		If $k \ge \mathrm{c}_1\ell$ and $\zeta \ge \mathrm{c}_2\log \ell$, then $\norm{\smash{(\mat{A}\mat{\Omega})(\mat{B}\mat{\Omega})^\dagger}\mat{B} - \mat{A}}_{\rm F} \le \mathrm{c}_3 \norm{\smash{\mat{A}\mat{B}^\dagger}\mat{B} - \mat{A}}_{\rm F}$.
	\end{enumerate}
\end{theorem}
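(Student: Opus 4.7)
The plan is to derive both parts from two standard primitives about the SparseStack distribution that are the substantive content of~\cite{CEMT25}: an oblivious subspace embedding (OSE) property and an approximate matrix multiplication (AMM) bound. With $k \ge \mathrm{c}_1 r$ and $\zeta \ge \mathrm{c}_2 \log r$, the SparseStack $\mat{\Omega}$ simultaneously satisfies, except on an event of probability at most $5\%$: (i) for any fixed $r$-dimensional subspace $\set{V}\subseteq\complex^n$, $\mat{\Omega}^*$ acts on $\set{V}$ as a constant-distortion isometry; and (ii) for any matrices $\mat{M},\mat{N}$ of appropriate shape,
\[
	\expect \norm{\mat{M}\mat{\Omega}\mat{\Omega}^*\mat{N} - \mat{M}\mat{N}}_{\rm F}^2 \lesssim \norm{\mat{M}}_{\rm F}^2 \norm{\mat{N}}_{\rm F}^2 / k,
\]
with a Markov-style tail. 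Absolute constants get absorbed into $\mathrm{c}_1,\mathrm{c}_2,\mathrm{c}_3$, and a union bound at the end produces the stated $90\%$ event.

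For the range-finder claim, I would adapt the deterministic HMT/Woodruff analysis. Factor $\mat{A} = \mat{U}\mat{\Sigma}\mat{V}^*$, split $\mat{V} = [\mat{V}_1\; \mat{V}_2]$ at the $r$-th singular value, and introduce the test matrix $\mat{Y} = \mat{A}\mat{\Omega}(\mat{V}_1^*\mat{\Omega})^\dagger \mat{V}_1^*$, which lies in $\range(\mat{A}\mat{\Omega}) = \range(\mat{Q})$. Since $\mat{Q}\mat{Q}^*\mat{A}$ is the best Frobenius approximation of $\mat{A}$ from $\range(\mat{Q})$, we have $\norm{\mat{A} - \mat{Q}\mat{Q}^*\mat{A}}_{\rm F} \le \norm{\mat{A} - \mat{Y}}_{\rm F}$. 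The OSE property applied to $\range(\mat{V}_1)$ controls the conditioning of $(\mat{V}_1^*\mat{\Omega})^\dagger$; the AMM property controls the cross-term $\mat{\Sigma}_2 \mat{V}_2^*\mat{\Omega}(\mat{V}_1^*\mat{\Omega})^\dagger$. Combining these ingredients in the standard way yields $\norm{\mat{A} - \mat{Y}}_{\rm F} \lesssim \norm{\mat{A}-\lowrank{\mat{A}}_r}_{\rm F}$.

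For the sketched pseudoinverse claim, I would invoke the classical sketch-and-solve framework for multiple-response regression due to Sarl\'os and Clarkson--Woodruff. Treat each row of $\mat{A}$ as an independent right-hand side against the design $\mat{B}$; the unsketched minimizer is the corresponding row of $\mat{A}\mat{B}^\dagger \mat{B}$, while the sketched minimizer is the corresponding row of $(\mat{A}\mat{\Omega})(\mat{B}\mat{\Omega})^\dagger\mat{B}$. The standard analysis requires only OSE on $\range(\mat{B}^*)$ (an $\ell$-dimensional subspace, so $k \gtrsim \ell$ and $\zeta \gtrsim \log\ell$ suffice) and AMM between the optimal residual $\mat{A} - \mat{A}\mat{B}^\dagger\mat{B}$ and $\mat{B}$, and it returns a constant-factor Frobenius bound for each row; summing over rows gives the claim.

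The main obstacle is book-keeping rather than new probabilistic content: one must verify that a single setting of SparseStack parameters and a single high-probability event suffice to invoke OSE and AMM simultaneously at the constants required for both parts, and that the tail matrices $\mat{A}-\lowrank{\mat{A}}_r$ and $\mat{A}-\mat{A}\mat{B}^\dagger\mat{B}$ can be fed into the AMM bound as deterministic inputs (they are independent of $\mat{\Omega}$). Once this is arranged, both inequalities drop out of the abstract \cite{CEMT25} primitives and the standard sketch-and-solve calculus, with no additional randomized-linear-algebra ingredients needed.
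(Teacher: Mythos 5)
The paper does not prove this theorem. It is imported verbatim as a ``simplified version'' of results from \cite{CEMT25}; the text immediately preceding the statement is just a pointer to that reference. So there is no in-paper argument to compare your sketch against, and no attempt at one would have been expected by the authors.

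That said, your reconstruction follows the canonical route by which such guarantees are actually established in the randomized numerical linear algebra literature: reduce both claims to an oblivious subspace embedding property on a fixed $r$- or $\ell$-dimensional subspace plus an approximate matrix multiplication bound, then run the HMT/Woodruff test-matrix argument for the range-finder and the Sarl\'os/Clarkson--Woodruff sketch-and-solve argument for the sketched pseudoinverse. That is consistent with what \cite{CEMT25} does in the underlying reference, so the approach is sound. The genuinely substantive part you leave implicit is exactly the thing the paper outsources: that a SparseStack with $k \gtrsim r$ and $\zeta \gtrsim \log r$ in fact delivers a constant-distortion OSE (this is the hard probabilistic content, building on Cohen-style analyses of sparse embeddings) and that a matching AMM bound holds with the same parameters and on the same event. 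Asserting these as ``standard primitives'' and then saying the remaining work is ``book-keeping'' is a fair description of the reduction, but those two primitives are precisely where the difficulty lives. If you were required to prove this theorem from first principles rather than cite it, that is where you would need to add substance; as written, your sketch would more honestly be framed, like the paper's, as a reduction to the results of \cite{CEMT25} or \cite{Coh16}.

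Two small technical notes on the sketch itself. First, for the range-finder step you need $\mat{V}_1^*\mat{\Omega}$ to have full row rank $r$ before $(\mat{V}_1^*\mat{\Omega})^\dagger$ behaves as a right inverse, and this full-rank event must be folded into the same high-probability OSE event; you allude to conditioning but should make this explicit. Second, in the sketched pseudoinverse part, the multiple-response sketch-and-solve argument also requires that $\mat{B}\mat{\Omega}$ retain rank $\operatorname{rank}(\mat{B})$, which again is a consequence of OSE on $\range(\mat{B}^*)$, and you need to be careful about the direction of the inequality when $\mat{B}$ is rank-deficient (the pseudoinverses on both sides then project onto possibly different spaces). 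These are routine but worth stating since you are claiming the reduction is the whole proof.
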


Combining this result with \cref{thm:arp} immediately yields bounds for \ARP implementations using sparse sketching.
\begin{theorem}[Adaptive randomized pivoting: End-to-end guarantees] \label{thm:end-to-end}
	Let $\mat{A} \in \complex^{m\times n}$ be a matrix, and fix a rank $r > 0$.
	Choose $k\ge \mathrm{c}_1 r$, $p \ge \mathrm{c}_1 k$, and $\zeta \ge \mathrm{c}_2 \log k$, and form SparseStacks $\mat{\Omega} \in \real^{n\times k}$ and $\mat{\Phi} \in \real^{n\times p}$ with row sparsity $\zeta$.
	The \ARP and \SkARP algorithms with embeddings $\mat{\Omega}$ and $\mat{\Phi}$ produce rank-$k$ approximations $\Ahat$ satisfying
	\begin{equation*}
		\operatorname{Median}(\norm{\mat{A} - \Ahat}_{\rm F}) \le \mathrm{c}r^{1/2} \norm{\mat{A} - \lowrank{\mat{A}}_r}_{\rm F} \quad \text{for a universal constant } \mathrm{c} > 0
	\end{equation*}
\end{theorem}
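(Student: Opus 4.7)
The plan is to chain three ingredients: the range-finder guarantee from \cref{thm:sparsestacks}(1), the \ARP error bound \cref{thm:arp}, and (for \SkARP) the sketched pseudoinverse guarantee from \cref{thm:sparsestacks}(2). Expectations are converted to high-probability tail bounds via Markov's inequality, and a union bound over the embedding events yields the median statement.

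First, I would apply part (1) of \cref{thm:sparsestacks} to $\mat{\Omega}$ with the given rank $r$: since $k\ge \mathrm{c}_1 r$ and $\zeta \ge \mathrm{c}_2 \log k \ge \mathrm{c}_2 \log r$, with probability at least $0.9$ the range-finder $\mat{Q} = \orth(\mat{A}\mat{\Omega})$ satisfies $\norm{\mat{A} - \mat{Q}\mat{Q}^*\mat{A}}_{\rm F} \le \mathrm{c}_3 \norm{\mat{A} - \lowrank{\mat{A}}_r}_{\rm F}$; call this event $\mathcal{E}_{\rm rf}$. Conditioning on $\mathcal{E}_{\rm rf}$ and invoking \cref{thm:arp} conditionally on $\mat{Q}$ gives
\begin{equation*}
	\expect\bigl[\norm{\mat{A} - \Ahat_1}_{\rm F}^2 \bigm| \mat{Q}\bigr] = (k+1)\norm{(\Id-\mat{Q}\mat{Q}^*)\mat{A}}_{\rm F}^2 \le (k+1)\mathrm{c}_3^2 \norm{\mat{A} - \lowrank{\mat{A}}_r}_{\rm F}^2.
\end{equation*}
Choosing $k = \lceil \mathrm{c}_1 r\rceil$ so that $k+1 = O(r)$, Markov's inequality produces $\norm{\mat{A} - \Ahat_1}_{\rm F}^2 \le 3(k+1)\mathrm{c}_3^2 \norm{\mat{A} - \lowrank{\mat{A}}_r}_{\rm F}^2$ with conditional probability at least $2/3$. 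A union bound with $\mathcal{E}_{\rm rf}$ leaves probability at least $0.9 \cdot (2/3) > 1/2$, so the median of $\norm{\mat{A} - \Ahat_1}_{\rm F}$ is at most $\mathrm{c}\, r^{1/2} \norm{\mat{A} - \lowrank{\mat{A}}_r}_{\rm F}$, and by \cref{eq:comparision-A1-A2} the same bound holds for $\Ahat_2$.

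For \SkARP I would additionally invoke part (2) of \cref{thm:sparsestacks} applied to the independent sketch $\mat{\Phi}$ with $\mat{B} = \mat{A}(\set{S},:)\in\complex^{k\times n}$: since $p \ge \mathrm{c}_1 k$ and $\zeta \ge \mathrm{c}_2 \log k$, with probability at least $0.9$ one obtains $\norm{\mat{A} - \Ahat_{\rm OSID}}_{\rm F} \le \mathrm{c}_3 \norm{\mat{A} - \Ahat_2}_{\rm F}$, which I would chain with the bound on $\Ahat_2$ from the previous paragraph. A final union bound over the three favorable events keeps the joint probability above $1/2$ and yields the stated median bound with an inflated constant.

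The main obstacle is bookkeeping of probabilities and constants: the Markov threshold and the $0.9$ probabilities in \cref{thm:sparsestacks} must be adjusted so the joint success probability stays strictly above $1/2$ across all the favorable events. A subtler point is that part (2) of \cref{thm:sparsestacks} must be applied to the \emph{random} submatrix $\mat{A}(\set{S},:)$; this is handled by conditioning on $(\mat{\Omega},\set{S})$ before drawing $\mat{\Phi}$ and using independence of $\mat{\Phi}$ from $\set{S}$, so that the $0.9$ guarantee holds pointwise in every realization of $\set{S}$.
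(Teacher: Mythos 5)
Your proof follows the same route as the paper's — apply \cref{thm:sparsestacks}(1) to the range-finder, invoke \cref{thm:arp} conditionally on $\mat{Q}$, convert the expectation to a tail bound via Markov, pass to $\Ahat_2$ through \cref{eq:comparision-A1-A2}, and chain \cref{thm:sparsestacks}(2) for \SkARP — with the only differences being your choice of Markov threshold and your use of a product of conditional probabilities rather than the paper's union bound, both of which land strictly above $1/2$. You are in fact a bit more careful than the paper in two spots: the paper's proof silently replaces the $(k+1)$ of \cref{thm:arp} by $(r+1)$, which is valid only under an implicit $k = \order(r)$ that you make explicit by taking $k = \lceil \mathrm{c}_1 r\rceil$; and your remark about conditioning on $(\mat{\Omega},\set{S})$ before drawing the independent $\mat{\Phi}$ is exactly what is needed to apply \cref{thm:sparsestacks}(2) to the random submatrix $\mat{A}(\set{S},:)$ pointwise.
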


\begin{proof}
	Throughout this proof, we let $\mathrm{c} > 0$ denote an arbitrary universal constant whose value we permit to change on every usage, even on the same line.
	By \cref{thm:sparsestacks}, the following range-finder guarantee holds with at least 90\% probability:
	\begin{equation*}
		\norm{\mat{A} - \mat{Q}\mat{Q}^*\mat{A}}_{\rm F} \le \mathrm{c} \norm{\mat{A} - \lowrank{\mat{A}}_r}_{\rm F}. 
	\end{equation*}
	In the event this bound holds, \cref{thm:arp} implies
	\begin{equation*}
		\expect_{\set{S}} \norm{\mat{A} - \Ahat_1}_{\rm F}^2 \le (r+1)\norm{\mat{A} - \mat{Q}\mat{Q}^*\mat{A}}_{\rm F}^2 \le \mathrm{c}r \norm{\mat{A} - \lowrank{\mat{A}}_r}_{\rm F}^2.
	\end{equation*}
	Here, $\expect_{\set{S}}$ denotes the expectation with respect to the randomness in the set $\set{S}$.
	By Markov's inequality and the comparison \cref{eq:comparision-A1-A2}, we conclude that
	\begin{equation*}
		\norm{\mat{A} - \Ahat_2}_{\rm F} \le \norm{\mat{A} - \Ahat_1}_{\rm F} \le \mathrm{c}r^{1/2} \norm{\mat{A} - \lowrank{\mat{A}}_r}_{\rm F}
	\end{equation*}
	with probability at least 80\%.
	The stated bound for the median error of \ARP follows.
	To analyze \SkARP, we apply \cref{thm:sparsestacks} again to conclude
	\begin{align*}
		\norm{\mat{A} - \Ahat_{\rm OSID}}_{\rm F} &= \norm{\mat{A} - (\mat{A}\mat{\Phi})(\mat{A}(\set{S},:)\mat{\Phi})^\dagger \mat{A}(\set{S},:)}_{\rm F} \\
		&\le \mathrm{c} \norm{\mat{A} - \mat{A}\mat{A}(\set{S},:)^\dagger \mat{A}(\set{S},:)}_{\rm F}= \mathrm{c}\norm{\mat{A} - \Ahat_2}_{\rm F} \le  \mathrm{c}r^{1/2} \norm{\mat{A} - \lowrank{\mat{A}}_r}_{\rm F}
	\end{align*}
	with probability at least 70\%.
	The stated median bound follows.
\end{proof}

Let me highlight two limitations of this analysis.
First, the constants in \cref{thm:end-to-end} are unspecified and will not be small using this proof technique.
Second, \cref{thm:end-to-end} only bounds the \emph{median} error.
The source of both limitations is the usage of Markov's inequality in the proof of \cref{thm:end-to-end} and the imported result \cref{thm:sparsestacks}.
I believe that, with better analysis, it should be possible to obtain bounds on the error with small explicit constants and that hold with high probability.
Developing such bounds would likely be challenging due to the need to obtain sharp analysis for the SparseStack or another sparse random embedding (a long-standing open question \cite[sec.~5.2]{ABB+26}) and concentration bounds for the volume sampling least-squares residual.

\section{Experiments} \label{sec:experiments}

In this section, we provide experimental results to evaluate the speed and accuracy of several versions of the \ARP procedure.
Code may be found at \url{https://github.com/eepperly/Adaptive-Randomized-Pivoting}.

\subsection{Experimental setup}

We compare three versions of \ARP, each using a different choice of the interpolation matrix $\mat{W}$.
We call these methods \ARP (which outputs $\Ahat_1$), \ProjARP (which outputs $\Ahat_2$), and \SkARP (using OSID, \cref{sec:skarp}).
As baselines, we test the sketchy pivoted \QR method with OSID (\textsf{SkQR}, \cite{VM17,DM23,DCMP23}) and the accelerated randomly pivoted \QR method (\textsf{RPQR}, \cite{ETW24a}). 
A comparison of methods is provided in \cref{tab:method-comparison}.

All methods are implemented in MATLAB.
All randomized embeddings, both for range-finding \cref{eq:randomized-rangefinder} and OSID \cref{eq:osid}, are sparse sign embeddings with row-sparsity $\zeta = 4$.
Following \cite{DCMP23}, use oversampling factor $c=2$ for OSID.
All experiments use real values and store numbers in double precision.

\begin{table}[t]
	\centering
	\caption{Summary of row interpolative decomposition methods. Runtimes consist of the end-to-end cost of forming the matrix $\mat{Q}$, identifying the index set $\set{S}$, and building the interpolation matrix $\mat{W}$.}
	\begin{tabular}{lll}  \toprule
		Method  & Matrix $\mat{W}$  & Runtime \\\midrule
		\ARP & $\mat{Q}\mat{Q}(\set{S},:)^{-1}$ & $\order(mn + k^2m)$ \\
		\ProjARP & $\mat{A}\mat{A}(\set{S},:)^\dagger$ & $\order(kmn)$ \\
		\SkARP & $(\mat{A}\mat{\Phi})(\mat{A}(\set{S},:)\mat{\Phi})^\dagger$ & $\order(mn + k^2(m+n))$ \\\midrule
		\SkQR & $(\mat{A}\mat{\Phi})(\mat{A}(\set{S},:)\mat{\Phi})^\dagger$ & $\order(mn + k^2(m+n))$ \\
		\RPQR & $\mat{A}\mat{A}(\set{S},:)^\dagger$ & $\order(kmn)$ \\ \bottomrule
	\end{tabular}
	\label{tab:method-comparison}
\end{table}

\subsection{Runtime experiments}
To evaluate the speed of \ARP, we test on two examples, one dense and one sparse.
These examples are generated as
\begin{equation*}
	\mat{A}_j = \diag(i^{-2} : i=1,\ldots,m) \cdot \mat{G}_j
\end{equation*}
where $\mat{G}_1 \in \real^{10^4\times 10^4}$ is a dense Gaussian matrix and $\mat{G}_2 \in \real^{10^6\times 10^4}$ is a sparse Gaussian matrix with 30 nonzero entries per column in random positions.
We collect runtimes using MATLAB's \texttt{timeit}, which reports a median of multiple trials.
We implement both \SkARP and \ARP with the fast blocked implementation of \RejectionRPQR.
As a point of comparison, we also implement \ARP according to the pseudocode from Cortinovis and Kressner's pseudocode, which uses un-blocked sequential sampling and applies a Householder reflector to the entire matrix at each step.

\begin{figure}[t]
	\centering
	\includegraphics[width=0.48\linewidth]{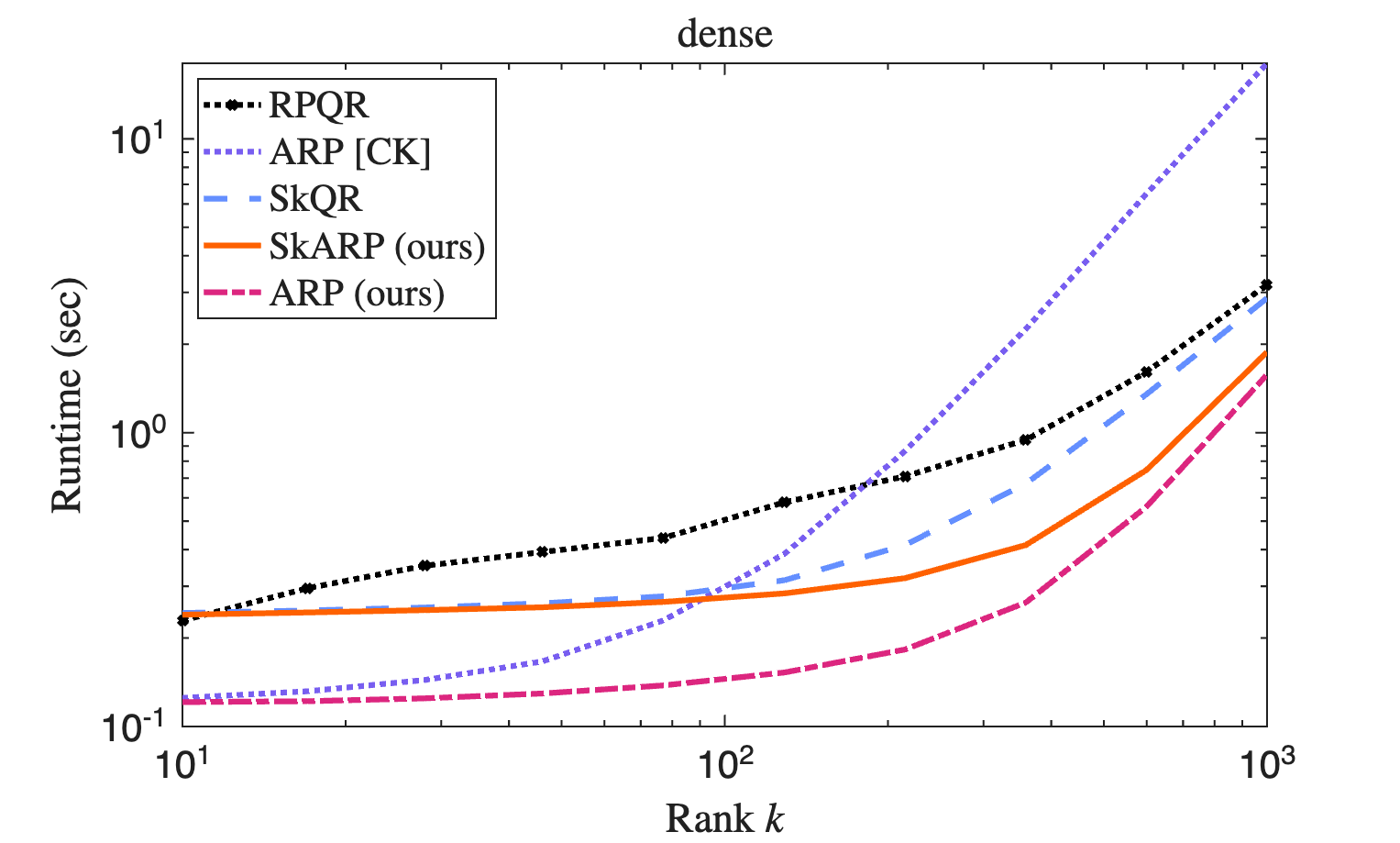}
	\includegraphics[width=0.48\linewidth]{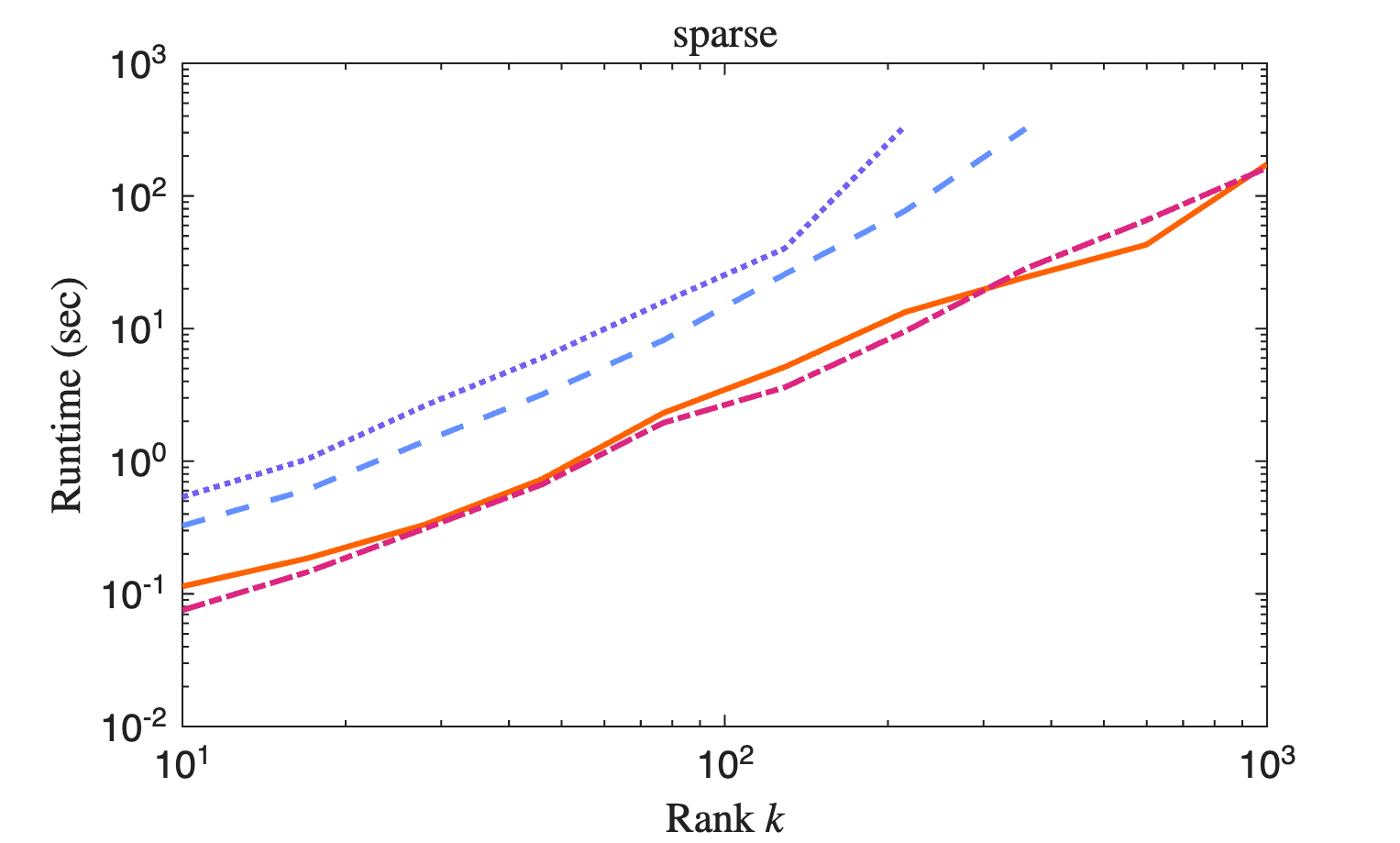}
	\caption{\textbf{\ARP speed tests.}
		Runtime for five row interpolative decomposition methods on dense (\emph{left}) and sparse (\emph{right}) test matrices, described in text.}
	\label{fig:time}
\end{figure}

Results are shown in \cref{fig:time}.
For the \textit{dense} example, this paper's \ARP implementations achieve a maximum speedup of \textbf{2$\times$ over randomly pivoted \QR, 1.8$\times$ speedup over sketchy pivoted \QR, and 11$\times$ over an \ARP implementation using Cortinovis and Kressner's pseudocode}.
For the \textit{sparse} example, this paper's \ARP implementations achieve maximal speedups of 4$\times$ over sketchy pivoted \QR and 35$\times$ over an \ARP implementation using Cortinovis and Kressner's pseudocode.
Further speedups for \ARP are possible by implementing the \textsf{RejectionSampleSubmatrix} subroutine in a low-level programming language like \textsf{C}\texttt{++}.
These experiments suggest that this paper's versions of \ARP are among the fastest algorithms for computing an interpolative decomposition.

\subsection{Accuracy experiments}
To evaluate the accuracy of \ARP variants, we test on two matrices.
Our first example, \emph{kernel}, tabulates the inverse distances of a set of points $\vec{x}_i,\vec{y}_j \in \real^2$:
\begin{equation*}
	\mat{A}(i,j) = \frac{1}{\norm{\smash{\vec{x}_i - \vec{y}_j}}} \quad \text{for } i,j = 1,\ldots,n.
\end{equation*}
We set $n \coloneqq 10^4$ and choose the points $\{\vec{x}_i\}$ and $\{\vec{y}_j\}$ to be equispaced Cartesian grids on $[0,1)^2$ and $[1,2)\times[0,1)$, respectively.
This example is similar to matrices that occur in rank-structured matrix applications \cite{Mar11,Wil21}.
Our second matrix, \emph{genetics}, comes from the GSE10072 cancer genetics data set from the National Institutes of Health and is an established benchmark for subset selection \cite{SE16,CK24}.
This matrix may be downloaded at \url{https://ftp.ncbi.nlm.nih.gov/geo/series/GSE10nnn/GSE10072/matrix/}.
We perform row selection on its transpose, which has dimensions $107\times22283$.
We run for 100 trials for each example, and we report the relative Frobenius-norm error.
Lines show the mean error, and shaded regions track the maximum and minimum error.

\begin{figure}[t]
	\centering
	\includegraphics[width=0.48\linewidth]{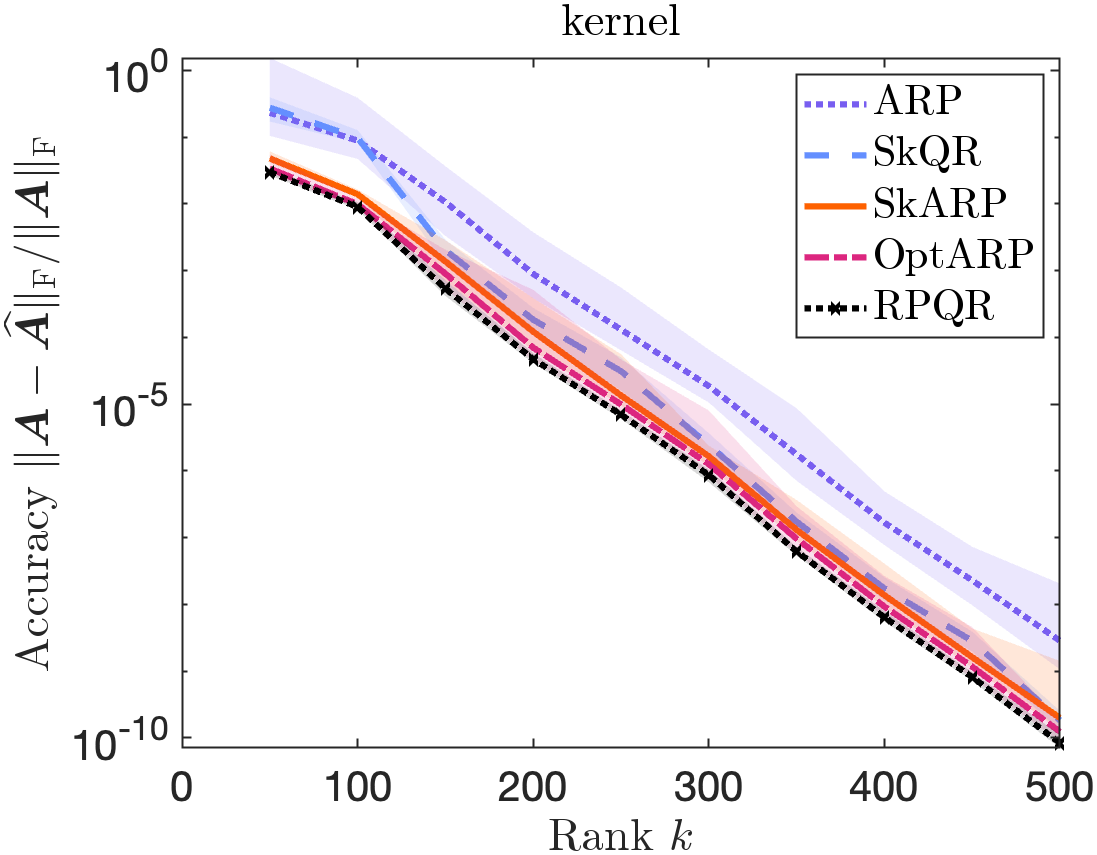}
	\includegraphics[width=0.48\linewidth]{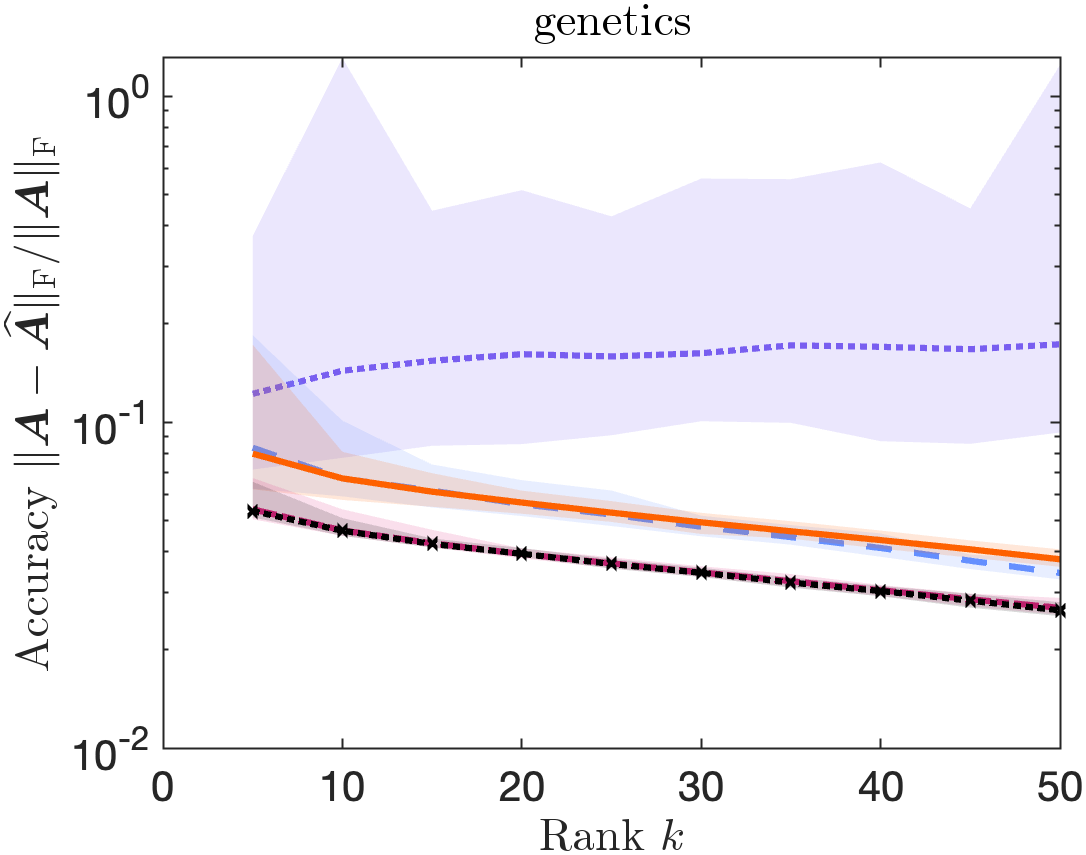}
	\caption{\textbf{\ARP accuracy tests.}
		Relative error for five row interpolative decomposition methods on kernel (\emph{left}) and genetics (\emph{right}) test matrices.
		Lines show mean of 100 trials, and shaded regions show the maximum and minimum errors.}
	\label{fig:accuracy}
\end{figure}

\Cref{fig:accuracy} shows the results.
On both problems, randomly pivoted \QR is the most accurate, with the ``\textsf{OptARP}'' approximation $\Ahat_2$ achieving nearly the same accuracy.
Sketching pivoted and sketchy \ARP (\textsf{SkARP}) are the next most accurate methods, achieving errors roughly 1.5$\times$ higher than randomly pivoted \QR and \textsf{OptARP} while being meaningfully faster (cf.\ \cref{fig:time}).
The cheapest \textsf{ARP} approximation $\Ahat_1$ is meaningfully less accurate than other methods, which is to be expected due to the factor of $k+1$ in \cref{thm:arp}.
Whether this loss of accuracy is acceptable depends on the application.
For the \textit{kernel} example, the rate of convergence is geometric and one may be willing to tolerate a modestly lower quality approximation.
For the \textit{genetics} example, the rate of convergence for all methods is slow and the error of the plain \ARP approximation $\Ahat_1$ actually \emph{increases} with $k$.
On the basis of these experiments, we recommend the \textsf{SkARP} variant for general-purpose use and the \textsf{OptARP} variant for applications where the highest-possible accuracy is critical.

\section*{Acknowledgments}

I give my warm thanks to Chris Cama\~no, Alice Cortinovis, Micha\l\ Derezi\'nski, Daniel Kressner, Raphael Meyer, Arvind Saibaba, Joel Tropp, and Robert Webber for helpful conversations.

\appendix

\section{Optimality} \label{app:optimaity}
The following result shows that the factor of $k+1$ in \cref{thm:active-regression} cannot be improved.
As such, volume sampling is an optimal method for active linear regression with $\ell = k$ queries.
\begin{proposition}[Optimality of volume sampling] \label{thm:optimality}
	There exists  $\mat{X} \in \real^{(k+1)\times k}$ and $\vec{y} \in \real^{k+1}$ such that for any subset $\set{S} \subseteq \{1,\ldots,k+1\}$ of $k$ elements,
	\begin{equation*}
		\norm{\mat{X}\mat{X}(\set{S},:)^{-1}\vec{y}(\set{S}) - \vec{y}}^2 = (k+1) \norm{\smash{(\Id - \mat{X}\mat{X}^\dagger)\vec{y}}}^2.
	\end{equation*}
\end{proposition}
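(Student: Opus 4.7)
The plan is to exhibit a concrete extremal instance $(\mat{X},\vec{y})$ and verify the identity by direct computation for every $k$-subset. I would try the ``centered'' design
\begin{equation*}
\mat{X} = \twobyone{\Id_k}{-\onevec^*} \in \real^{(k+1)\times k}, \qquad \vec{y} = \onevec \in \real^{k+1},
\end{equation*}
whose columns are the vectors $\evec_i - \evec_{k+1}$ for $i = 1,\ldots,k$.

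The key structural observation is that every column of $\mat{X}$ is orthogonal to $\vec{y} = \onevec$, so $\vec{y}$ lies entirely in the orthogonal complement of $\range(\mat{X})$. Hence $(\Id - \mat{X}\mat{X}^\dagger)\vec{y} = \vec{y}$ and the right-hand side of the claimed identity evaluates immediately to $(k+1)\norm{\vec{y}}^2 = (k+1)^2$. For the left-hand side, since $\mat{X}\vec{\hat{\beta}}$ with $\vec{\hat{\beta}} \coloneqq \mat{X}(\set{S},:)^{-1}\vec{y}(\set{S})$ lies in $\range(\mat{X})$ while $\vec{y}$ is orthogonal to it, the Pythagorean theorem gives
\begin{equation*}
\norm{\mat{X}\vec{\hat{\beta}} - \vec{y}}^2 = \norm{\mat{X}\vec{\hat{\beta}}}^2 + \norm{\vec{y}}^2 = \norm{\mat{X}\vec{\hat{\beta}}}^2 + (k+1),
\end{equation*}
so it suffices to verify that $\norm{\mat{X}\vec{\hat{\beta}}}^2 = k(k+1)$ for every $k$-subset $\set{S}$.

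By the symmetry among the first $k$ rows of $\mat{X}$, only two cases need to be checked. If $\set{S} = \{1,\ldots,k\}$ then $\mat{X}(\set{S},:) = \Id_k$, $\vec{\hat{\beta}} = \onevec$, and $\mat{X}\vec{\hat{\beta}} = \twobyone{\onevec}{-k}$ has squared norm $k + k^2 = k(k+1)$. If $\set{S}$ omits some row $j \le k$, a short back-substitution in $\mat{X}(\set{S},:)\vec{\hat{\beta}} = \onevec$ yields $\hat{\beta}_i = 1$ for $i \ne j$ and $\hat{\beta}_j = -k$. The resulting product $\mat{X}\vec{\hat{\beta}}$ has entry $1$ in positions $\{1,\ldots,k\}\setminus\{j\}$, entry $-k$ in position $j$, and entry $1$ in position $k+1$, with squared norm $(k-1) + k^2 + 1 = k(k+1)$.

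The only genuine obstacle is guessing the right extremal construction; once the centered design is in hand, every remaining step is elementary linear algebra. The factor $k+1$ emerges automatically from the perfectly symmetric placement of the $k+1$ rows of $\mat{X}$ relative to $\vec{y}$, which makes every $k$-subset equally bad.
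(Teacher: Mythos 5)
Your proof is correct and takes essentially the same approach as the paper: choose $\vec{y}=\onevec$ and $\mat{X}$ with columns orthogonal to $\onevec$, exploit the fact that $\mat{X}\vec{\hat{\beta}}$ has zero entry-sum to pin down the residual. The paper works with an arbitrary full-rank $\mat{X}$ satisfying $\mat{X}^*\onevec=\vec{0}$ and reads off the missing entry $(\mat{X}\vec{\hat{\beta}})_u=-k$ directly from the sum-to-zero constraint, whereas you instantiate a concrete $\mat{X}$ and invoke Pythagoras; these are cosmetic differences, and both land at $(k+1)^2$.
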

This result appears without proof in \cite[Prop.~6]{DW17}.
For completeness, we prove it here.

\begin{proof}[Proof of \cref{thm:optimality}]
	Choose $\vec{y} \coloneqq \vec{1}_{k+1} \in \real^{k+1}$ to be the vector of all ones, and choose any full-rank $\mat{X} \in \real^{(k+1)\times k}$ satisfying $\mat{X}^*\vec{y} = \vec{0}$.
	Since $\mat{X}^*\vec{y} = \vec{0}$, the least-squares solution is zero and
	\begin{equation} \label{eq:active-lower-1}
		\min_{\vec{\beta} \in \complex^k} \norm{\mat{X}\vec{\beta} - \vec{y}}^2 = \norm{\vec{y}}^2 = k+1.
	\end{equation}
	
	Now, let $\set{S}$ be any set of $k$ row indices, and let $u$ denote the sole element of $\{1,\ldots,k+1\} \setminus \set{S}$.
	The matrix $\mat{X}(\set{S},:)$ is invertible the left nullspace $\operatorname{null}(\mat{X}^*) = \operatorname{span}\{\vec{1}_{k+1}\}$ does not contain a nonzero vector supported on $\set{S}$. 
	Define $\vec{\hat{\beta}} \coloneqq \mat{X}(\set{S},:)^{-1}\vec{y}(\set{S})$.
	The vector $\vec{\hat{\beta}}$ exactly satisfies the equations in the $\set{S}$ positions: 
	\begin{equation*}
		(\mat{X}\vec{\hat{\beta}})(\set{S}) = \vec{1}_k.
	\end{equation*}
	However, $\mat{X}\vec{\hat{\beta}}$ must be in the range of $\mat{X}$, which is orthogonal to $\vec{1}_{k+1}$.
	Ergo, for the single index $u \notin \set{S}$, we must have
	\begin{equation*}
		(\mat{X}\vec{\hat{\beta}})_u = -\sum_{s\in\set{S}} (\mat{X}\vec{\hat{\beta}})_s = -k.
	\end{equation*}
	We conclude that
	\begin{equation} \label{eq:active-lower-2}
		\norm{\smash{\mat{X}\vec{\hat{\beta}} - \vec{y}}}^2 = \sum_{s \in \set{S}}\underbrace{[(\mat{X}\vec{\hat{\beta}})_s - 1]^2}_{=0} + \underbrace{[(\mat{X}\vec{\hat{\beta}})_u - 1]^2}_{=(k+1)^2} = (k+1)^2.
	\end{equation}
    Combining \cref{eq:active-lower-1,eq:active-lower-2} yields the stated result.
\end{proof}

\bibliographystyle{siamplain}
\bibliography{refs}

\end{document}